\theoremstyle{definition}
\newtheorem{definition}{Definition}[section]
\definecolor{dkgreen}{rgb}{0,0.6,0}
\definecolor{gray}{rgb}{0.5,0.5,0.5}
\definecolor{mauve}{rgb}{0.58,0,0.82}
\tiny\color{gray},
\def\balignst#1\ealignst{\begin{talign*}#1\end{talign*}}
\def\balignt#1\ealignt{\begin{talign}#1\end{talign}}
\let\originalleft\left
\let\originalright\right
\renewcommand{\left}{\mathopen{}\mathclose\bgroup\originalleft}
\renewcommand{\right}{\aftergroup\egroup\originalright}
\def\tinycitep*#1{{\tiny\citep*{#1}}}
\def\tinycitealt*#1{{\tiny\citealt*{#1}}}
\def\tinycite*#1{{\tiny\cite*{#1}}}
\def\smallcitep*#1{{\scriptsize\citep*{#1}}}
\def\smallcitealt*#1{{\scriptsize\citealt*{#1}}}
\def\smallcite*#1{{\scriptsize\cite*{#1}}}
\def\mbb#1{\mathbb{#1}}
\def\<{\left\langle} % Angle brackets
\def\>{\right\rangle}
\def\E{\mbb{E}} % Expectation symbol
\renewenvironment{proof}{\noindent\textbf{Proof.}\hspace*{.3em}}{\qed\\}
\newenvironment{proof-sketch}{\noindent\textbf{Proof Sketch}
  \hspace*{1em}}{\qed\bigskip\\}
\newenvironment{proof-idea}{\noindent\textbf{Proof Idea}
  \hspace*{1em}}{\qed\bigskip\\}
\newenvironment{proof-of-lemma}[1][{}]{\noindent\textbf{Proof of Lemma {#1}}
  \hspace*{1em}}{\qed\\}
\newenvironment{proof-of-theorem}[1][{}]{\noindent\textbf{Proof of Theorem {#1}}
  \hspace*{1em}}{\qed\\}
\newenvironment{proof-attempt}{\noindent\textbf{Proof Attempt}
  \hspace*{1em}}{\qed\bigskip\\}
\newcommand{\handout}[5]{
  \noindent
  \begin{center}
    \framebox{
      \vbox{
        \hbox to 5.78in { {\bf \title } \hfill #2 }
        \vspace{4mm}
        \hbox to 5.78in { {\Large \hfill #5  \hfill} }
        \vspace{2mm}
        \hbox to 5.78in { {\em #3 \hfill #4} }
      }
    }
  \end{center}
  \vspace*{4mm}
}
\definecolor{light-grey}{RGB}{120, 120, 120}
\definecolor{BrickRed}{RGB}{140, 0, 0}
\definecolor{LighterBlue}{RGB}{47, 140, 255}
\newif\ifcomments
\newcommand{\cut}[1]{}
\newcommand{\maybecut}[1]{\textcolor{orange}{#1}}
\newcommand{\debating}[1]{\textcolor{yellow}{#1}}
\newcommand{\donepromised}[1]{{}}
\newcommand{\toresolve}[1]{\textcolor{red}{#1}}
\newcommand{\remove}[1]{\textcolor{green}{\sout{#1}}}
\newcommand{\removehide}[1]{}
\newcommand{\alt}[1]{\textcolor{brown}{#1}}
\newcommand{\commentsm}[1]{\textcolor{BrickRed}{({\bf SM:} #1)}}
\newcommand{\commental}[1]{\textcolor{magenta}{({\bf AL:} #1)}}
\newcommand{\old}[1]{\textcolor{red}{\sout{#1}}}
\newcommand{\new}[1]{\textcolor{BrickRed}{\uline{#1}}}
    \newcommand{\commentsm}[1]{}
    \newcommand{\commentrti}[1]{}
    \newcommand{\commentpv}[1]{}
    \newcommand{\commental}[1]{}
    \newcommand{\maybecut}[1]{}
    \newcommand{\debating}[1]{}
    \newcommand{\donepromised}[1]{}
    \newcommand{\toresolve}[1]{}
    \newcommand{\remove}[1]{}
    \newcommand{\removehide}[1]{}
    \newcommand{\alt}[1]{}
    \newcommand{\old}[1]{}
    \newcommand{\new}[1]{}
\newcommand{\describe}[3][0pt]{\hspace*{.12em}\underbracket[0.5pt][1pt]{#2\hspace*{#1}}_\text{#3}}
\newcommand{\algoshort}{ESPER}
\newtheorem{theorem}{Theorem}[section]
\title{You Can't Count on Luck: \\Why Decision Transformers and RvS\\ Fail in Stochastic Environments}
\author{Keiran Paster\\
Department of Computer Science\\
University of Toronto, Vector Institute\\
\texttt{keirp@cs.toronto.edu} \\
\And
Sheila A. McIlraith \& Jimmy Ba \\
Department of Computer Science \\
University of Toronto, Vector Institute\\
\texttt{\{sheila, jba\}@cs.toronto.edu} \\
}
\begin{document}

\maketitle

\begin{abstract}
Recently, methods such as Decision Transformer \citep{chen_decision_2021} that reduce reinforcement learning to a prediction task and solve it via supervised learning (RvS) \citep{emmons_rvs_2021} have become popular due to their simplicity, robustness to hyperparameters, and strong overall performance on offline RL tasks. However, simply conditioning a probabilistic model on a desired return and taking the predicted action can fail dramatically in stochastic environments since trajectories that result in a return may have only achieved that return due to luck. In this work, we describe the limitations of RvS approaches in stochastic environments and propose a solution. Rather than simply conditioning on the return of a single trajectory as is standard practice, our proposed method, \algoshort{}, learns to cluster trajectories and conditions on average
cluster returns, which are independent from environment stochasticity. Doing so allows \algoshort{} to achieve strong alignment between target return and expected performance in real environments. We demonstrate this in several challenging stochastic offline-RL tasks including the challenging puzzle game 2048, and Connect Four playing against a stochastic opponent. In all tested domains, \algoshort{} achieves \textit{significantly} better alignment between the target return and achieved return than simply conditioning on returns. \algoshort{} also achieves higher maximum performance than even value-based baselines.
\end{abstract}

\section{Introduction}

Offline reinforcement learning (RL) \citep{levine_offline_2020, chen_decision_2021, kumar_conservative_2020, agarwal_optimistic_2020} is a promising approach to train agents without requiring online experience in an environment, which is desirable when online experience is expensive or when offline experience is abundant. A recent trend in offline RL has been to use simple approaches that do RL via Supervised Learning (RvS) (e.g., \citep{chen_decision_2021, emmons_rvs_2021, schmidhuber_reinforcement_2019, ghosh_learning_2021, kumar_reward-conditioned_2019}) rather than typical value-based approaches. RvS algorithms such as Decision Transformer \citep{chen_decision_2021} train a model to predict an action based on the current state and an outcome such as a desired future return. These agents ask the question \textit{``if I assume the desired outcome will happen, in my experience what action do I typically take next.''} These methods are popular due to their simplicity, strong performance on offline benchmark tasks \citep{chen_decision_2021, emmons_rvs_2021}, and similarity to large generative models (e.g., \citep{radford_language_2019, brown_language_2020, hoffmann_training_2022, alayrac_flamingo_2022}) that continue to show stronger performance on more tasks when training larger models on more data.

However, as we will show, \textbf{methods that condition on outcomes such as return can make incorrect decisions in stochastic environments regardless of scale or the amount of data they are trained on.} This is because implicitly these methods assume that actions that end up achieving a particular goal are optimal for achieving that goal. This assumption is not true in stochastic environments, where it is possible that the actions taken in the trajectory were actually sub-optimal and that the outcome was only achieved due to lucky environment transitions. For example, consider the gambling environment in \autoref{fig:toy}. Though there may be many episodes in which an agent gets a positive return from gambling ($a_0$ or $a_1$), gambling is sub-optimal since it results in a negative return in expectation while $a_2$ always results in a positive return. Since RvS takes all of these trajectories as expert examples of how to achieve the goal, RvS will act sub-optimally.

This limitation of RvS approaches in stochastic environments is well-hidden by the majority of benchmark tasks for offline RL (e.g., \citep{fu_d4rl_2020, gulcehre_rl_2020}), which tend to be deterministic or near-deterministic.  Locomotion tasks in MuJoCo \citep{todorov_mujoco_2012} and Atari games in the Arcade Learning Environment \citep{bellemare_arcade_2013} are two such examples.
While deterministic tasks can be solved by replaying promising action sequences, stochastic tasks are significantly less trivial to solve and often more realistic, requiring reactive policies to be learned \citep{machado_revisiting_2018}. Many real-world tasks are stochastic, either inherently or due to partial observability, such as having a conversation, driving a car, or navigating an unknown environment. In their current form, approaches such as Decision Transformer \citep{chen_decision_2021} are likely to behave in unexpected ways in such scenarios.

One way to view why RvS doesn't work in the gambling environment is that when conditioning on trajectories that achieve a positive reward, the model doesn't get to see any of the trajectories where the same sequence of actions leads to a negative reward. Due to these unrealistic dynamics, there is no policy that would generate this set of trajectories in the real environment, so it doesn't make sense to treat them as expert trajectories. Our insight is that there are certain functions of the trajectory other than return that, when conditioned on, will better preserve the dynamics of the environment. Our approach, called \algoshort{}, realizes this by \textit{conditioning on outcomes that are fully determined by the actions of the agent and independent of the uncontrollable stochasticity of the environment}. While trajectory return is not such an outcome, we show that the \textit{expected} return of behavior shown in a trajectory is, and how to learn such a value. The contributions of this work are as follows:

\begin{itemize}
    \item We show that RvS-style agents can only \textbf{\textit{reliably} achieve outcomes that are independent from environment stochasticity} and only depend on information that is under the agent's control.
    \item We propose a method to \textbf{learn environment-stochasticity-independent representations of trajectories using adversarial learning}, on top of which we label each trajectory with the average return for trajectories with this representation.
    \item \textbf{We introduce several stochastic offline RL benchmark tasks} and show that while RvS agents consistently underperform on the conditioned returns, our approach (\algoshort{}, short for \textbf{e}nvironment-\textbf{s}tochasticity-inde\textbf{pe}ndent \textbf{r}epresentations) \textbf{achieves significantly stronger alignment between target return and actual expected performance.} \algoshort{} gets \textbf{state-of-the-art performance} on these tasks, solving all tasks with near-maximum performance and surpassing the performance of even strong value-based methods such as CQL \citep{kumar_conservative_2020}.
\end{itemize}

\definecolor{verylightgrey}{rgb}{0.95,0.95,0.95}

\begin{figure}
\centering
\begin{subfigure}{.47\textwidth}
  \centering
  \begin{tikzpicture}

\definecolor{highlight}{rgb}{1,1,1}
\definecolor{background}{rgb}{0.9,0.9,0.9}

\node[rectangle,draw,line width=2pt,fill=highlight] {$s_0$}[sibling distance = 3cm]
    child {node[circle,draw,fill=background] {$a_0$}[sibling distance = 1.5cm]
        child {node [rectangle,draw,solid,fill=background] {$r=5$}
                edge from parent [solid] node [left] {$50\%$}}
        child {node [rectangle,draw,solid,fill=background] {$r=-15$} 
                edge from parent [solid] node [right] {$50\%$}}
            edge from parent [solid] node [left,xshift=-10pt] {$33\%$}}
    child {node[circle,draw,line width=2pt,fill=highlight] {$a_1$}[sibling distance = 1.5cm]
        child {node [rectangle,draw,fill=highlight] {$r=1$}
                edge from parent [solid,line width=2pt] node [left] {$50\%$}}
        child {node [rectangle,draw,fill=background,line width=1pt] {$r=-6$}
                edge from parent [solid,line width=1pt] node [right] {$50\%$}}
            edge from parent [solid,line width=2pt] node [left] {$33\%$}}
    child {node[circle,draw,line width=2pt,fill=highlight] {$a_2$}[sibling distance = 1.5cm]
        child {node [rectangle,draw,fill=highlight] {$r=1$}}
            edge from parent [solid,line width=2pt] node [right,xshift=10pt] {$33\%$}};

\end{tikzpicture}
  \label{fig:toy_task}
\end{subfigure}%
\hspace{2em}
\begin{subfigure}{.47\textwidth}
  \centering
  \begin{center}
\begin{tabular}{lll} 
 \toprule
 Conditioned\\Return & $\E[r]$ & Actions \\
 \midrule
 $r=-15$ & $-5.0$ & $\{a_0\}$\\
 $r=-6$ & $-2.5$ & $\{a_1\}$\\
 \rowcolor{verylightgrey}$r=1$ & $-0.17$ & $\{a_1, a_2\}$\\
 $r=5$ & $-5.0$ & $\{a_0\}$\\
 \bottomrule
\end{tabular}
\end{center}
\vspace{1em}
\end{subfigure}
\caption{\textbf{Left:} A simple gambling environment with three actions where return-conditioned algorithms such as Decision Transformer \citep{chen_decision_2021} will fail, \textit{even with infinite data}. The optimal action, $a_2$, will always grant the agent $1$ reward while gambling ($a_0$ and $a_1$) give a stochastic amount of reward. The numbers ($33\%$) above each action represent the data collection policy. \textbf{Right:} The second column represents the performance of a policy behaviorally cloned from trajectories achieving the reward in the first column. For example, conditioned on the agent receiving $r=1$, a third of trajectories take $a_1$ and two-thirds take $a_2$. Averaging the returns $-2.5 \times 1/3 + 1 \times 2/3 = -0.17$. \textbf{No matter how much data the model is trained on, when conditioned on receiving a reward of $1$, it will always gamble and take $a_1$ some of the time, rather than just taking $a_2$, which guarantees the reward.}} 
\label{fig:toy}
\end{figure}

\section{Approach}

\subsection{Problem Setup}

We model the environment as an MDP, defined as the tuple $(S, A, T, R, \gamma)$. $S$ is a set of states; $A$ is a set of actions; the transition probabilities $T: S \times A \times S \to [0, 1]$ defines the probability of the environment transitioning from state $s$ to $s'$ given that the agent acts with action $a$; the reward function $R: S \times A \to \mathbb{R}$ maps a state-action transition to a real number; and $0 \leq \gamma \leq 1$ is the discount factor, which controls how much an agent should prefer rewards sooner rather than later. The performance a policy, defined by $\pi(a|s)$, is typically measured using the cumulative discounted return $\sum_t \gamma^t r_t$ that the policy achieves in an episode.

Central to our work is algorithms that do reinforcement learning via supervised learning (RvS) such as Decision Transformer \citep{chen_decision_2021}. These approaches train a model using supervised learning on a dataset of trajectories to predict $p_{\mathcal{D}}(a|s, R)$ - the probability of next action conditioned on the current state and on the agent getting a cumulative discounted return $R = \sum_t \gamma^t r_t$. At evaluation time, the model is conditioned on a desired target return and the agent takes the actions predicted by the model.

To understand the problem with RvS in stochastic environments, we build off of a general framework where a goal is presented to a policy, the policy acts in the environment, and the agent is scored on how well the goal was achieved as described in \citet{furuta_generalized_2022}. Mathematically, the agent's objective can be expressed as minimizing some distance metric $D(\cdot, \cdot)$ between a target goal $z$ and the \textit{information statistics} $I(\tau)$ of the trajectory generated by a conditional policy $\pi(a|s, z)$:

\vspace{-1em}
\begin{equation}
    \min_{\pi} \E_{z \sim p(z), \tau \sim p_{\pi_z}(\tau)}\left[D(I(\tau), z)\right]
    \label{eq:im}
\end{equation}
where $p(z)$ is the goal distribution, $p_{\pi_z}(\tau)$ is the trajectory distribution $(s_1, a_1, s_2, a_2, \ldots)$ obtained by running rollouts in an MDP following policy $\pi(a|s, z)$. In this section, we refer to the goal presented to an agent as $z$ and the function that calculates which goal was achieved in a particular trajectory $\tau$ as its statistics $I(\tau)$. As discussed in \citet{furuta_generalized_2022}, this framework includes a number of popular algorithms, including RvS algorithms that condition on target returns (e.g., \citep{chen_decision_2021, emmons_rvs_2021, kumar_reward-conditioned_2019, schmidhuber_reinforcement_2019}) or goals (e.g., \citep{ghosh_learning_2021, paster_planning_2021}). For example, in Decision Transformer \citep{chen_decision_2021} cumulative discounted reward is used as the trajectory statistics $I(\tau) = \sum_t \gamma^t r_t$ and a reasonable choice for $D(\cdot, \cdot)$ could be the squared error between the target and achieved return.

\citet{furuta_generalized_2022} justify using the supervised learning approach (RvS) to minimize \autoref{eq:im} by claiming that trajectories with statistics $I(\tau)$ act as \textit{optimal} examples of how to act to achieve $D(I(\tau), z) = 0$ for an agent whose goal is to achieve $z = I(\tau)$. However, in stochastic environments, the actions taken in these trajectories may not be optimal since \textbf{a trajectory may achieve a goal due to uncontrollable randomness in the environment rather than from the agent's own actions.} In fact, as discussed in \citet{paster_planning_2021}, not only can the actions taken in these ``optimal trajectories'' actually be sub-optimal, but they can be arbitrarily bad depending on the policy used to collect the data, the environment dynamics, and the choice of distance metric.

In this section, we propose to sidestep this issue by limiting the types of statistics $I(\tau)$ we are interested in to those that are not affected by uncontrollable randomness in the environment. We show first that RvS policies trained to reach such goals, under the assumption of infinite data and model capacity, learn optimal policies for achieving such goals and that goals that are independent from environment stochasticity are the only goals that RvS can reliably achieve. Importantly, we argue that limiting the form of trajectory statistics $I(\tau)$ in this way is not prohibitive and that desirable quantities can be transformed to fit this property in an intuitive way (e.g., return turns into expected return when optimized to be independent of environment stochasticity). Finally, we propose an approach for automatically finding such trajectory statistics using adversarial learning.

\subsection{Stochasticity Independent Representations}

\begin{definition}[Consistently Achievable]
A goal $z$ is \textit{consistently achievable} from state $s_0$ under policy $\pi(a|s, z)$ if $\E_{\tau \sim p_{\pi_z}(\tau|s_0)}\left[D(I(\tau), z)\right] = 0$.
\end{definition}

A natural question is under which circumstances will the RvS approach actually minimize \autoref{eq:im} to zero? Clearly the approach works empirically on deterministic environments \citep{chen_decision_2021, emmons_rvs_2021} since simply replaying an action sequence that achieved a goal once will achieve it again. While not all statistics of a trajectory will lead to consistently achievable goals under policies trained with RvS, \textbf{the supervised learning approach will minimize \autoref{eq:im} if $I(\tau)$ is independent from environment stochasticity}.

\begin{theorem}
Let $\pi_{\mathcal{D}}$ be a data collecting policy used to gather data used to train an RvS policy, and assume this policy is trained such that $\pi(a_t|s_0, a_0, \ldots, s_t, z) = p_{\pi_{\mathcal{D}}}(a_t|s_0, a_0, \ldots, s_t, I(\tau) = z)$. Then, for any goal $z$ such that $p_{\pi_{\mathcal{D}}}(I(\tau)=z|s_0) > 0$, goal $z$ is \textit{consistently achievable} iff $p_{\pi_{\mathcal{D}}}(s_t|s_0, a_0, \ldots, a_{t-1}) = p_{\pi_{\mathcal{D}}}(s_t|s_0, a_0, \ldots, a_{t-1}, I(\tau) = z)$.
\label{thm:main}
\end{theorem}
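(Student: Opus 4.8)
The plan is to reduce consistent achievability to an almost-sure statement about the RvS rollout and then compare that rollout's trajectory law with the law of the data conditioned on $\{I(\tau)=z\}$. Since $D$ is a metric, $D\ge 0$ and $D(I(\tau),z)=0 \iff I(\tau)=z$, and a nonnegative random variable has zero expectation iff it vanishes almost surely; so $z$ is consistently achievable from $s_0$ exactly when $I(\tau)=z$ holds $p_{\pi_z}(\tau\mid s_0)$-almost surely. Write $h_t=(s_0,a_0,\ldots,s_t)$. Because $\pi_z$ acts inside the MDP, its one-step state transition depends only on the current state--action and not on the policy, so $p_{\pi_z}(s_{t+1}\mid h_t,a_t)=T(s_{t+1}\mid s_t,a_t)=p_{\pi_{\mathcal D}}(s_{t+1}\mid h_t,a_t)$; with the training assumption $\pi(a_t\mid h_t,z)=p_{\pi_{\mathcal D}}(a_t\mid h_t,I(\tau)=z)$ the chain rule gives $p_{\pi_z}(\tau\mid s_0)=\prod_{t\ge 0}p_{\pi_{\mathcal D}}(a_t\mid h_t,I(\tau)=z)\,p_{\pi_{\mathcal D}}(s_{t+1}\mid h_t,a_t)$, while the chain rule applied to the conditioned data law (well defined since $p_{\pi_{\mathcal D}}(I(\tau)=z\mid s_0)>0$) gives $p_{\pi_{\mathcal D}}(\tau\mid s_0,I(\tau)=z)=\prod_{t\ge 0}p_{\pi_{\mathcal D}}(a_t\mid h_t,I(\tau)=z)\,p_{\pi_{\mathcal D}}(s_{t+1}\mid h_t,a_t,I(\tau)=z)$. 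These two products have identical action factors and differ only in their one-step state factors, $p_{\pi_{\mathcal D}}(s_{t+1}\mid h_t,a_t)$ versus $p_{\pi_{\mathcal D}}(s_{t+1}\mid h_t,a_t,I(\tau)=z)$, and the theorem's identity is exactly the assertion that these agree.

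For the ``if'' direction I would induct on $t$ to prove that $(s_0,a_0,\ldots,s_t)$ has the same law under $p_{\pi_z}(\cdot\mid s_0)$ and under $p_{\pi_{\mathcal D}}(\cdot\mid s_0,I(\tau)=z)$: the base case is the point mass at $s_0$, and each step appends one action (same conditional, by the training assumption) and one state (same conditional, by the hypothesized transition identity, which applies since the common prefix law has already been shown to match). Passing to the horizon yields $p_{\pi_z}(\tau\mid s_0)=p_{\pi_{\mathcal D}}(\tau\mid s_0,I(\tau)=z)$; the right-hand side is supported on $\{I(\tau)=z\}$, hence so is the left-hand side, and by the reduction above $z$ is consistently achievable.

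For the ``only if'' direction I would argue the contrapositive: if the transition identity fails, take the earliest time $t$ and a prefix $(h_t,a_t)$ where $p_{\pi_{\mathcal D}}(\cdot\mid h_t,a_t)\ne p_{\pi_{\mathcal D}}(\cdot\mid h_t,a_t,I(\tau)=z)$. Minimality and the factorizations force this prefix to carry positive RvS-rollout probability, and from it the rollout samples the next state from $T(\cdot\mid s_t,a_t)$; I would then produce a next state $s_{t+1}$ with $T(s_{t+1}\mid s_t,a_t)>0$ but $p_{\pi_{\mathcal D}}(s_{t+1}\mid h_t,a_t,I(\tau)=z)=0$, so $p_{\pi_{\mathcal D}}(I(\tau)=z\mid h_t,a_t,s_{t+1})=0$ and with positive probability the rollout enters a history that no $I(\tau)=z$ trajectory in the data visits, giving $I(\tau)\ne z$ with positive probability. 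I expect this direction to be the main obstacle: a bare inequality of the two state kernels need not hand us a genuinely unsupported next state (it may merely reweight an already-supported one), so the argument likely needs the identity read along RvS-reachable prefixes plus a support condition making $p_{\pi_{\mathcal D}}(\cdot\mid h_t,a_t,I(\tau)=z)$ absolutely continuous with respect to $T(\cdot\mid s_t,a_t)$ there, and one must also be careful that once the rollout leaves the support of $p_{\pi_{\mathcal D}}(\cdot\mid s_0,I(\tau)=z)$ the imitated conditional is only defined by convention. The ``if'' direction, by contrast, is routine chain-rule bookkeeping.
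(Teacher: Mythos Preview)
Your ``if'' direction is essentially the paper's argument. Both expand $p_{\pi_z}(\tau\mid s_0)$ by the chain rule, replace the action factors using the training assumption and the state factors using the hypothesized transition identity, and recognize the resulting product as $p_{\pi_{\mathcal D}}(\tau\mid s_0,I(\tau)=z)$, which is supported on $\{I(\tau)=z\}$. Your explicit induction on the prefix length is a little more careful about \emph{where} the transition identity is invoked, but the content is the same.

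For the ``only if'' direction the paper is far terser than you are: from $p_{\pi_z}(\tau\mid s_0)>0\Rightarrow I(\tau)=z$ it deduces $p_{\pi_z}(\tau\mid s_0)=p_{\pi_z}(\tau\mid s_0,I(\tau)=z)$ and then simply asserts that this ``finally implies'' the transition identity under $p_{\pi_{\mathcal D}}$, with no further argument. Your contrapositive attempt is already more detailed than what appears in the paper, and the obstacle you flag---that unequal state kernels may merely \emph{reweight} supported next states rather than produce an unsupported one---is a genuine gap that the paper's proof does not address either. Indeed, consider a two-step MDP where $s_0\xrightarrow{a_0}\{s_1,s_1'\}$ each with probability $\tfrac12$, and $I(\tau)=z$ iff the second action is $a_1$; let the data policy take $a_1$ with probability $0.9$ from $s_1$ and $0.1$ from $s_1'$. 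The RvS policy conditioned on $z$ always takes $a_1$, so $z$ is consistently achievable, yet $p_{\pi_{\mathcal D}}(s_1\mid s_0,a_0)=\tfrac12$ while $p_{\pi_{\mathcal D}}(s_1\mid s_0,a_0,I(\tau)=z)=0.9$. So your hesitation is well founded: neither your sketch nor the paper's one-line claim closes this direction without an extra assumption (on supports, or on how $I$ interacts with later actions), and your proposal is in this respect more honest than the paper's proof.
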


\begin{proof}[Sketch]
The forward direction of the theorem is proven by rewriting $\E_{\tau \sim p_{\pi_z}(\tau|s_0)}\left[D(I(\tau), z)\right]$ to be taken over $p_{\pi_z}(\tau|s_0, I(\tau) = z)$ using the independence assumption combined with the form of the policy. For the other direction and the full proof, refer to Appendix \ref{sec:proof}.
\end{proof}

{\bf{Remarks. }} The above theorem has two major implications. First is that in stochastic environments, many of the most common statistics such as final states or cumulative discounted rewards are often correlated with environment stochasticity and therefore are unlikely to work well with the behavioral cloning approach. For example, in the gambling environment discussed in \autoref{fig:toy}, the achieved reward is correlated with the uncontrollable randomness present when gambling, and therefore when filtering an offline dataset for trajectories that achieve a high reward, the resulting trajectories will have unrealistic environment dynamics because we ignore the unsuccessful attempts (i.e., the agent always wins the money when gambling) and an agent behavioral cloned on these trajectories will not actually end up achieving a high reward consistently.

\autoref{thm:main} also gives a solution to ensure asymptotically that RvS policies will consistently achieve goals: use trajectory statistics that are independent from the stochasticity in environment transitions. This requires the statistic function $I(\tau)$ to return the same value for different trajectories where the only difference is in the environment transitions, making $I(\tau)$ essentially act as a cluster assignment where each trajectory within a cluster has the same value of $I(\tau)$ and has environment transitions sampled from the true distribution $p(s_{t+1}|s_t, a_t)$. These clusters can be thought of as datasets each generated by distinct policies in the same environment. For example, in the gambling environment each cluster could include trajectories that all took the same action.

Ultimately, we want to use a target \textit{average return} to control an RvS agent. Luckily, given any $I(\tau)$ that satisfies \autoref{thm:main}, we can transform it by using the average returns of trajectories within each cluster as the statistic instead. In the next section, we describe a practical algorithm for learning an $I(\tau)$ that satisfies \autoref{thm:main}, finding the average return of trajectories with the same value of $I(\tau)$ (i.e. in the same ``cluster''), and training an RvS policy conditioned on these estimated average returns.
\subsection{Learning Stochasticity-Independent Representations}

\begin{figure}[t]
    \centering
    \includegraphics[width=1\textwidth]{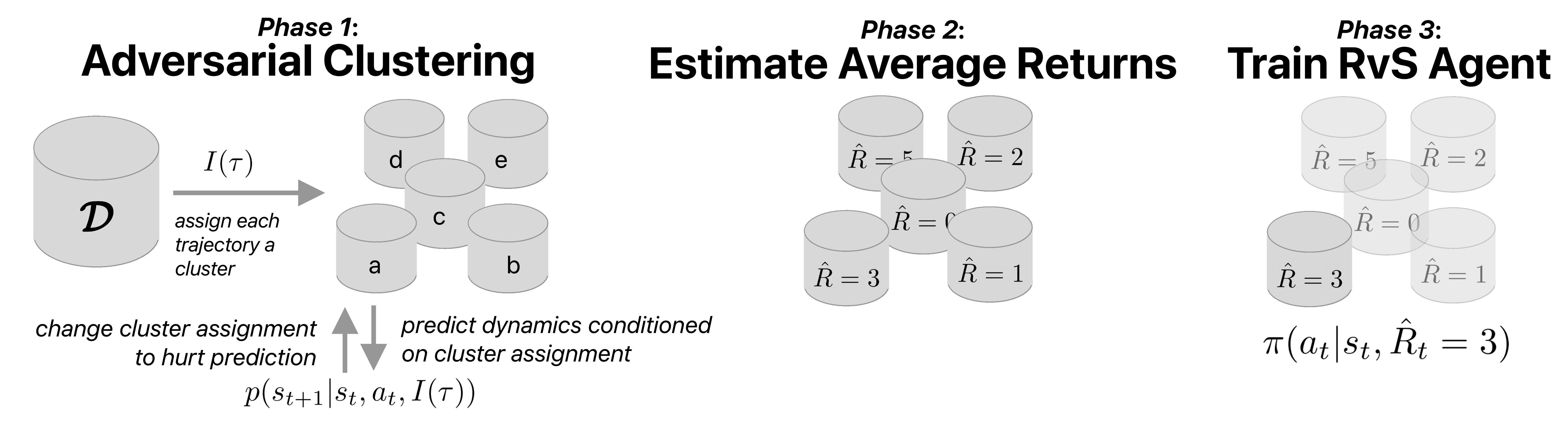}
    \caption{\algoshort{} learns a policy that conditions on a desired \textit{expected} return. In phase 1 of the algorithm, a function is learned using adversarial learning that assigns each trajectory in the dataset to a cluster such that the cluster assignments do not contain information about the stochastic outcomes of the environment that can help a dynamics model cheat to predict next states. In phase 2, the average return in each cluster is calculated. In phase 3, an RvS agent is trained to predict the next action given the current state and the estimated average return.\vspace{-1em}}
    \label{fig:algo_fig}
\end{figure}

In this section we propose a method for learning such trajectory representations, which we call \algoshort{}, short for \textbf{e}nvironment-\textbf{s}tochasticity-inde\textbf{pe}ndent \textbf{r}epresentations. Rather than use trajectory return as the statistic $I(\tau)$, which is often dependent on environment stochasticity and therefore may result in unexpected behavior, \algoshort{} uses a neural network to learn statistics that are independent from environment stochasticity. It does so in three phases: first, it uses an auto-encoder framework to learn a discrete representation $I(\tau)$ acting as a cluster assignment for each trajectory along with an adversarial loss that changes the representation in order to hurt the predictions of a dynamics model; second, a model learns to predict average trajectory return from the learned representation; and third, an RvS agent is trained to predict actions taken when conditioned on a state and estimated average return. 

For some trajectory $\tau = (s_1, a_1, r_1, s_2, a_2, r_2, \ldots)$, \algoshort{} trains the following parameterized models in addition to a vanilla RvS policy $\pi_{\xi}$:

\vspace{-1em}
\begin{align*}
    & \text{Clustering model:} & I(\tau) &\sim p_{\theta}(I(\tau)|\tau)\\
    & \text{Action predictor:} & a_t &\sim p_{\theta}(a_t|s_t, I(\tau))\\
    & \text{Return predictor:} & \hat{R} &= f_{\psi}(I(\tau))\\
    & \text{Transition predictor:} & s_{t+1} &\sim p_{\phi}(s_{t+1}|s_0, a_0, \ldots, s_t, a_t, I(\tau))
\end{align*}

{\bf{Adversarial clustering.}} Here, we will focus on learning representation in the form of clustering assignments. The goal is to learn a clustering assignment per trajectory $I(\tau)$ to minimize the discrepancy between the true environment transition $p(s_{t+1}|s_t, a_t)$ and the estimated transition $p_{\phi}(s_{t+1}|s_t, a_t, I(\tau))$ from the replay buffer when conditioned on $I(\tau)$. Such a representation will satisfy~\autoref{thm:main}. In the first phase of training, discrete cluster assignments produced by the clustering model are trained by taking alternating gradient steps with the following two losses to ensure that information present in $I(\tau)$ does not improve the performance of the dynamics model:
\begin{align}
    L(\theta) &= \E_{I(\tau) \sim p_{\theta}(I(\tau)|\tau)}\left[\describe{-\beta_{\text{act}}\log p_{\theta}(a_t|s_t, I(\tau))}{policy reconstruction} + \describe{\beta_{\text{adv}}\log p_{\phi}(s_{t+1}|s_0, a_0, \ldots, s_t, a_t, I(\tau))}{adversarial loss}\right]\label{eq:rep_loss}\\
    L(\phi) &= \E_{I(\tau) \sim p_{\theta}(I(\tau)|\tau)}\left[\describe{-\log p_{\phi}(s_{t+1}|s_0, a_0, \ldots, s_t, a_t, I(\tau))}{dynamics prediction}\right]\label{eq:dyn_loss}
\end{align}
In this step, the dynamics prediction loss is predicting the next state given the current state, action, and cluster assignment $I(\tau)$ while the adversarial loss tries to change $I(\tau)$ in order to hurt this prediction. Since a constant representation minimizes this loss, we use a policy reconstruction loss to encourage $I(\tau)$ to contain information about the policy used to generate the trajectory and encourage the formation of more than one cluster. $\beta_{\text{act}}$ and $\beta_{\text{adv}}$ are hyperparameters to balance the strength of this policy reconstruction loss with the adversarial loss.

{\bf{Estimate cluster average returns.}} After clustering, we learn to predict discounted returns $R = \sum_t \gamma^t r_t$ for each cluster.\footnote{Note that in practice, we condition the return predictor not only on the cluster assignment for the trajectory, but also on the first state and action. This improves performance while still satisfying \autoref{thm:main}.} The return predictor parameterized by $\psi$ is trained using the following loss:

\begin{equation}
    L(\psi) = \E_{I(\tau) \sim p_{\theta}(I(\tau)|\tau)}\left[\|R - f_{\psi}(I(\tau))\|_2^2\right].\label{eq:return_loss}
\end{equation}
The above losses are described such that the model produces a single estimated return per trajectory. However, in practice it can be extended to produce a value for each time-step in a trajectory by using suffixes of trajectories $\tau_{i, t} = (a_t, s_t, r_t, \ldots, a_T, s_T, r_T)$ to generate return predictions $\hat{R}_t$ for each time-step.

{\bf{Training policy on predicted returns.}} After learning expected future returns for each step in a trajectory, we use the dataset of $(s_t, a_t, \hat{R}_t)$ triples to learn a policy $\pi_{\xi}(a_t|s_t, \hat{R}_t)$ parameterized by $\xi$ by predicting the action from the current state and estimated expected return by minimizing the following loss just as in prior RvS works \citep{emmons_rvs_2021, chen_decision_2021}:

\begin{equation}
    L(\xi) = \E_{s_t, a_t, \hat{R}_t \sim \mathcal{D}}\left[-\log \pi_{\xi}(a_t|s_t, \hat{R}_t)\right]\label{eq:policy_loss}
\end{equation}

\subsection{Implementation}

The clustering model is implemented using an LSTM \citep{hochreiter_long_1997}, using truncated backpropagation through time for training on long trajectories (up to 1000 time-steps in our experiments). The action predictor, return predictor, and transition predictor are all implemented using MLPs. In practice, we found that the dynamics predictor works well even without conditioning on the entire trajectory history. We use a transformer policy as in Decision Transformer \citep{chen_decision_2021} (see \autoref{sec:baselines}), although we find that using an MLP is sufficient for strong performance in our benchmark tasks (see appendix \ref{sec:mlp}). Our clustering outputs a discrete value sampled from a categorical distribution and we use gumbel-softmax \citep{jang_categorical_2017, maddison_concrete_2017} in order to backpropagate gradients through it. We use normal distributions with unit variances for the transition predictor. Code for our implementation is available at \href{https://sites.google.com/view/esper-paper}{https://sites.google.com/view/esper-paper}. See appendix \ref{sec:code} for pseudocode for the adversarial clustering step of \algoshort{}. See appendix \ref{sec:training_details} for more information about the implementation, including specific hyperparameters for our environments as well as best practices.
\section{Experiments}

\begin{figure}[t]
    \centering
    \begin{center}
\begin{tabular}{llll} 
 \toprule
 Task & Return-Conditioned RvS (DT) & CQL & \algoshort{} (Ours) \\
 \midrule
 Gambling & -0.02 (0.24) & \textbf{1.0 (0.0)} & \textbf{1.0 (0.0)}\\ 
 Connect Four & 0.8 (0.07) & 0.61 (0.05) & \textbf{0.99 (0.03)} \\ 
 2048 & 0.57 (0.05) & 0.7 (0.09) & \textbf{0.81 (0.06)} \\
%  Half Cheetah (G) & \textbf{491.28 (32.4)} & \textbf{475.57 (21.85)} & 0 \\
 \bottomrule
\end{tabular}
\end{center}
    \caption{Comparing the maximum performance of RvS with return conditioning, RvS with \algoshort{} conditioning (ours), and CQL \citep{kumar_conservative_2020}, a strong value-based offline-RL baseline. The maximum possible expected return for a policy is $1.0$ on all tasks. Since all our tasks are stochastic, return conditioning fails to learn an optimal policy while \algoshort{} performs optimally or near optimally. CQL performs well on the gambling task, but cannot achieve the same level of performance as \algoshort{} on the more complicated tasks. Numbers in parenthesis are standard deviations over $3$ seeds.\vspace{-1em}}
    \label{fig:perf_table}
\end{figure}

We designed our experiments to answer the following questions: \textbf{(i)} How well does return-conditioned RvS work on stochastic problems where returns are correlated with environment stochasticity? \textbf{(ii)} Does \algoshort{} improve performance, both in terms of maximum achievable performance as well as in correlation between the target and achieved returns? \textbf{(iii)} Does the degree to which the learned representations in \algoshort{} are independent from environment stochasticity measurably affect the performance of the learned agent?

\subsection{Stochastic Benchmark Tasks}

Prior offline RL methods including Decision Transformer have been tested primarily on deterministic or near-deterministic environments such as continuous control tasks from the D4RL dataset \citep{fu_d4rl_2020} and Atari games \citep{bellemare_arcade_2013}. To get a better idea of how these methods and our approach work on realistic, stochastic environments, we created three new benchmark tasks. Since we are evaluating in an offline setting, each task consists of a stochastic environment as well as one or several data collection policies in order to gather the offline dataset.

{\bf Gambling.} An illustrative gambling environment described in \autoref{fig:toy} with only $3$ actions. The offline dataset for this task is 100k steps collected with a random policy.

{\bf Connect Four.} A game of Connect Four, a tile-based game where players compete to get four-in-a-row, against an opponent that does not place a tile in the rightmost column with a probability of $20\%$. The agent gets $1$ reward for winning, $0$ for a draw, and $-1$ for losing. The offline dataset for this task is 1M steps collected using a mixture of an $\epsilon$-greedy policy and a policy that always places tiles in the rightmost column.

{\bf 2048.} A simplified version of 2048 \citep{cirulli_2048_2014}, a sliding puzzle game where alike numbers can be combined to create tiles of larger value, where the agent gets a reward of $1$ by creating a 128 tile and gets no reward otherwise. The offline dataset for this task is 5M steps collected using a mixture of a random agent and an expert policy trained using PPO \citep{schulman_proximal_2017}.

Detailed information about each environment can be found in Appendix \ref{sec:benchmark_details}.

\begin{figure}
\centering
\begin{subfigure}{.5\textwidth}
  \centering
  \includegraphics[width=0.6\textwidth]{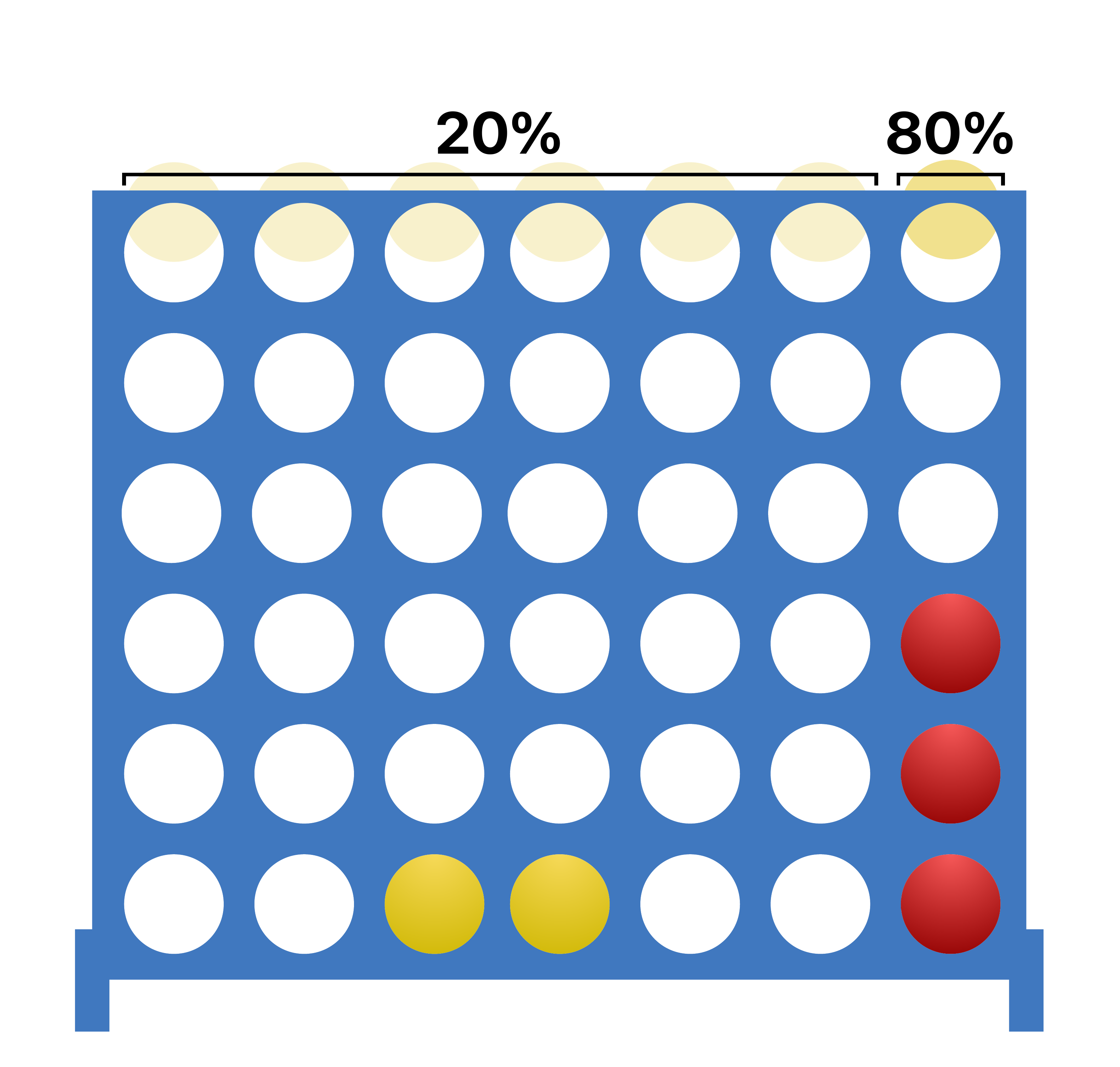}
  \vspace{1.3em}
  \label{fig:c4_pic}
\end{subfigure}%
% \hspace{7em}
\begin{subfigure}{.5\textwidth}
  \centering
  \includegraphics[width=1\textwidth]{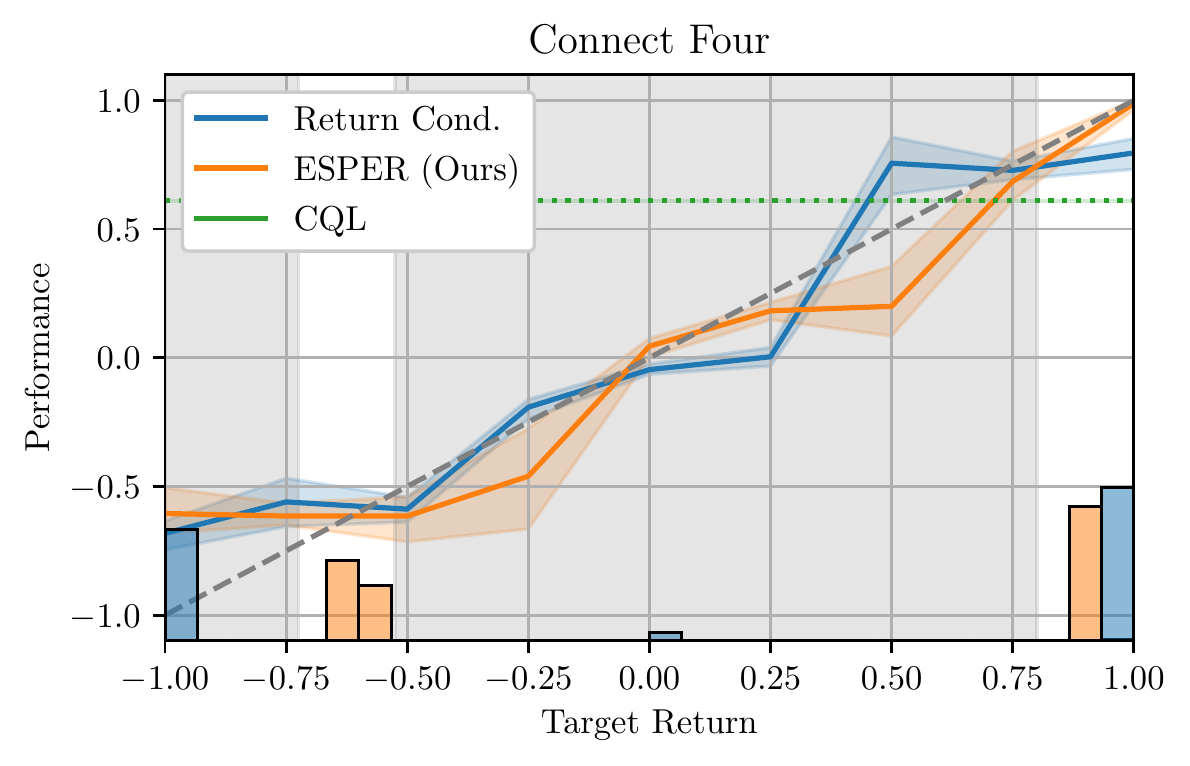}
\end{subfigure}
\caption{We consider Connect Four with a stochastic opponent that fails to place a piece in the rightmost column with a $20\%$ chance. In this environment, the agent gets a reward of $1$ for winning, $0$ for a draw, and $-1$ for losing. The histogram represents the distribution of returns each method is trained on and out-of-distribution regions for \algoshort{} are shaded. On in-distribution returns, \algoshort{} achieves performance (y-axis) closer to the target performance (x-axis) than return conditioning.\vspace{-1em}}
\label{fig:c4_plot}
\end{figure}

\subsection{Baselines}
\label{sec:baselines}

We compare our method primarily against Decision Transformer \citep{chen_decision_2021}, which trains a transformer \citep{vaswani_attention_2017} to predict the next action conditioned on a sequence of previous states, actions, and return-to-go targets. Our method uses the same model and training code, except rather than condition on target returns we condition on expected returns learned using \algoshort{}. While we conduct our experiments using transformers due to the popularity of Decision Transformer \citep{chen_decision_2021}, we note that our observations still hold when using MLP policies as in \citet{emmons_rvs_2021} while using significantly less compute and getting similar performance (see Appendix \ref{sec:mlp}). We also compare against a strong value-based approach for offline-RL called Conservative Q-Learning (CQL) \citep{kumar_conservative_2020}, which has been found to have strong performance on both discrete and continuous offline RL tasks. 

\subsection{Performance in Stochastic Environments}

\begin{figure}
\centering
\begin{subfigure}{.5\textwidth}
  \centering
  \includegraphics[width=1\textwidth]{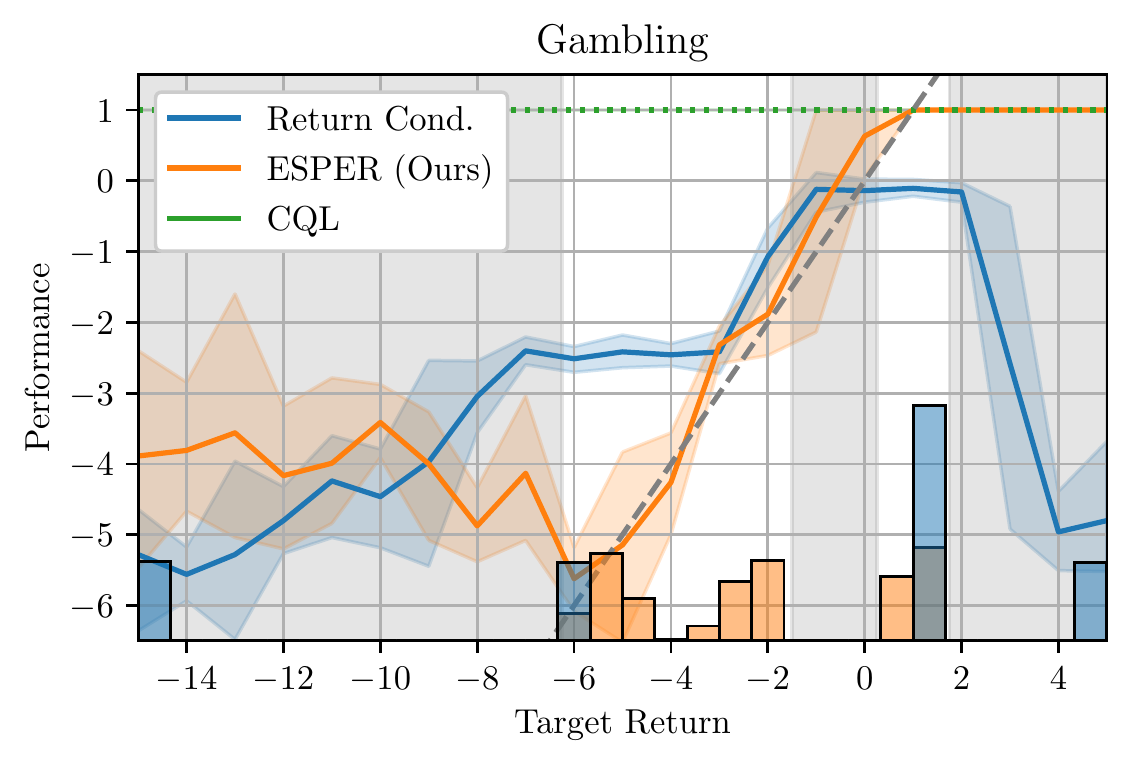}
  \vspace{-1.5em}
  \caption{}
  \label{fig:toy_plot}
\end{subfigure}%
\begin{subfigure}{.5\textwidth}
  \centering
  \includegraphics[width=1\textwidth]{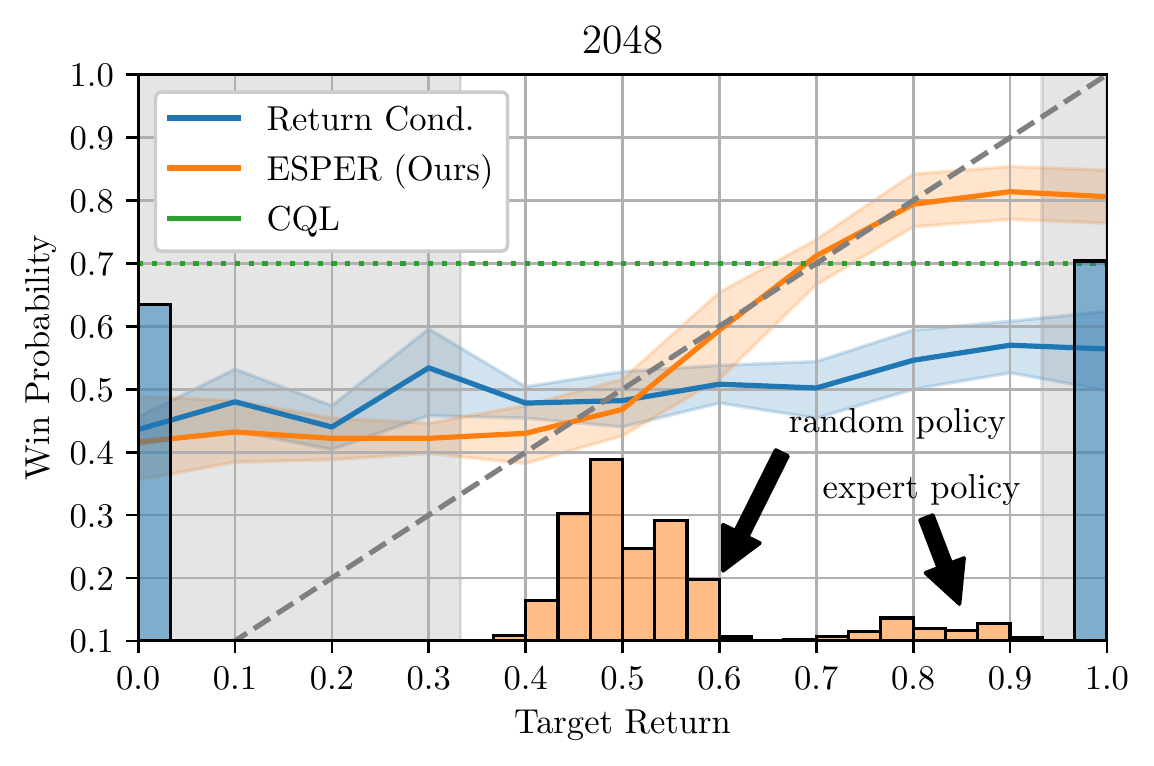}
  \vspace{-1.4em}
  \caption{}
  \label{fig:2048_plot}
\end{subfigure}
\caption{The histogram represents the distribution of returns each method is trained on. \textbf{Left:} In the illustrative gambling environment, \algoshort{} can achieve a range performances from $-5$ to $1$, while the performance (y-axis) of the return-conditioned agent is not aligned with the target return (x-axis). \textbf{Right:} In our modified 2048 task, the agent receives a reward of $1$ for creating a tile of value $128$ and $0$ otherwise. While a return-conditioned agent does not achieve a high level of performance, the \algoshort{} agent disentangles actions from environment stochasticity and is able achieve performance close to the target return. Interestingly, the distribution of expected returns learned by \algoshort{} match the performance-level of the data-collection policies.\vspace{-1em}}
\end{figure}

When comparing the performance of our approach, \algoshort{}, with return-conditioned RvS, we found that \textbf{\algoshort{} consistently achieves stronger alignment between target return and average performance} while also achieving a higher maximum level of performance when tuning the target return. As shown in \autoref{fig:toy_plot}, the return-conditioned baseline cannot achieve the maximum performance of $1$ while \algoshort{} learns behaviors corresponding to average performances ranging from $-5$ to $1$. In Connect Four (\autoref{fig:c4_plot}), despite seeing many examples of winning trajectories, an RvS agent cannot achieve more than $0.2$ average return (corresponding to a win-rate of $60\%$) while \algoshort{} can both win and lose with perfect accuracy depending on the target performance. In 2048 (\autoref{fig:2048_plot}), RvS cannot disentangle trajectories where the reward is high simply from luck from trajectories that took good actions and can only win the game about $60\%$ of the time. \algoshort{} learns many modes of behavior, and can be controlled to win anywhere from $30\%$ to $80\%$ of the time.

% \begin{figure}
%     \centering
%     \includegraphics[width=1\textwidth]{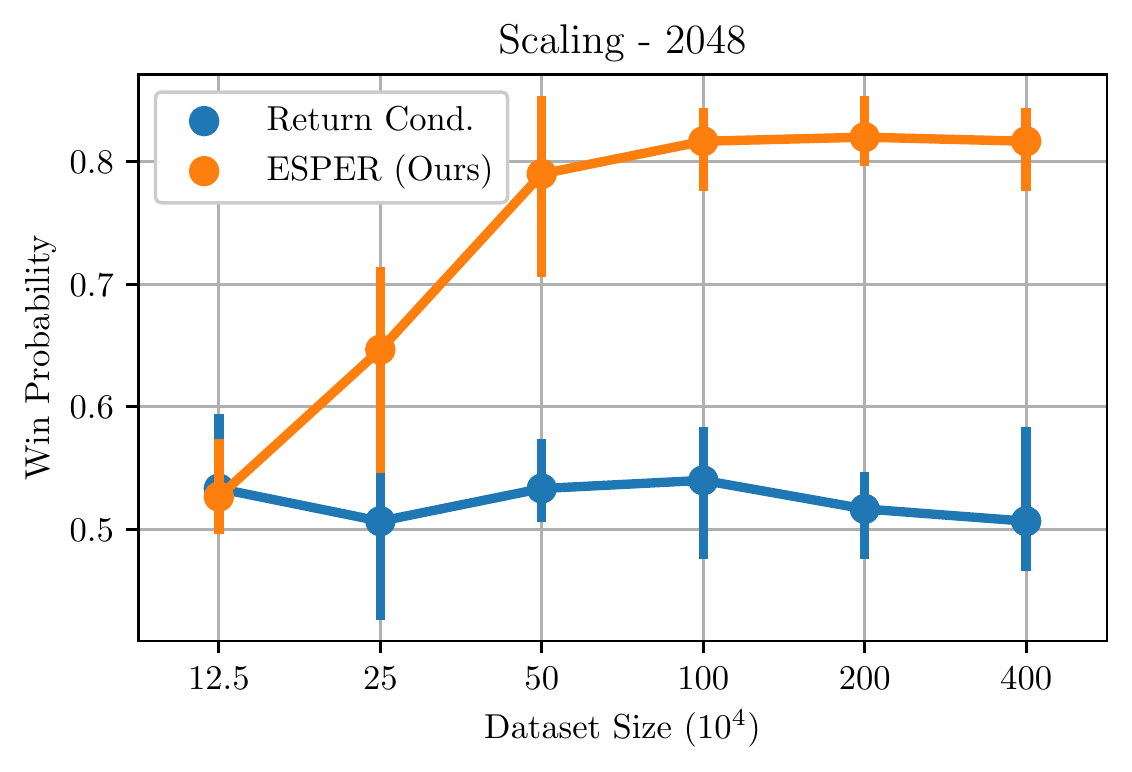}
%     \caption{While a return-conditioned agent does not improve its performance with more data, \algoshort{} achieves performance (y-axis) closer to the target return (x-axis) when trained on more data. $100\%$ data usage is 5 million frames.\vspace{-1em}}
%     \label{fig:data_plot}
% \end{figure}

\begin{figure}
\centering
\begin{subfigure}{.5\textwidth}
  \centering
  \vspace{0.5em}
  \includegraphics[width=1\textwidth]{figures/data.pdf}
  \vspace{-1.5em}
%   \caption{} add this back TODO for arxiv and camera ready!!
  \label{fig:data_plot}
\end{subfigure}%
\begin{subfigure}{.5\textwidth}
  \centering
  \includegraphics[width=1\textwidth]{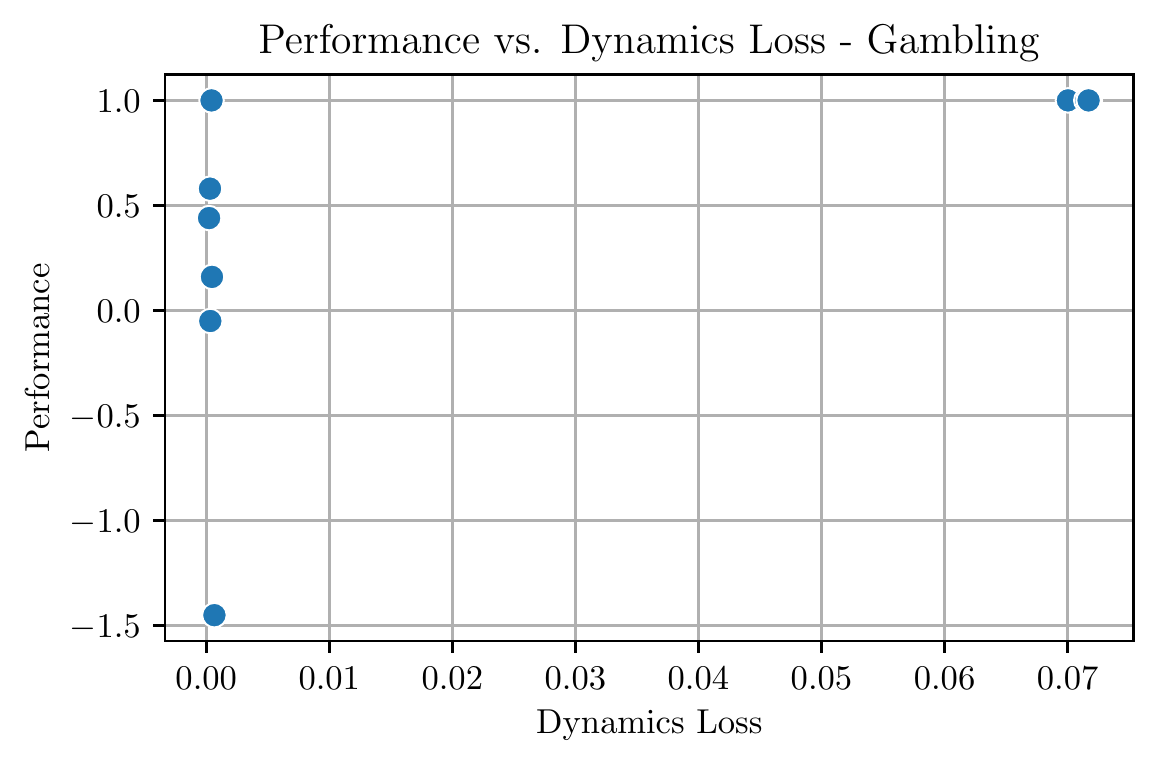}
  \vspace{-1.1em}
%   \caption{} 
  \label{fig:adv_plot}
\end{subfigure}
\caption{\textbf{Left:} While a return-conditioned agent does not improve its performance with more data, \algoshort{} achieves performance (y-axis) closer to the target return (x-axis) when trained on more data. $100\%$ data usage is 5 million frames. \textbf{Right:} We trained several agents with different hyperparameters on the gambling task. Notably, agents with trajectory statistics that enabled the dynamics model to ``cheat'' and get a low loss performed worse than those with representations independent from environment stochasticity, empirically confirming \autoref{thm:main}.\vspace{-1em}}
\label{fig:data_adv_plot}
\end{figure}

Additionally, \autoref{fig:data_adv_plot} shows that \textbf{this lack of performance for the return-conditioned agent is not due to a lack of data}. In the figure, going from $5\%$ data to $100\%$ (5 million frames) does not affect the performance of RvS while the performance of \algoshort{} gets closer to matching the target return with more data. Finally, we compared the maximum performance of RvS and \algoshort{} with the state-of-the-art value-based offline RL algorithm, Conservative Q-Learning (CQL) \citep{kumar_conservative_2020}. As shown in \autoref{fig:perf_table}, \algoshort{} and CQL achieve perfect performance on the gambling task while RvS cannot get positive reward. In Connect 4 and 2048, while CQL can get respectable performance (better than the average return of the offline dataset), \textbf{\algoshort{} achieves state-of-the-art performance}.
\vspace{-0.5em}
\subsection{Learned Representations and Behaviors}

Rather than simply condition on returns, \algoshort{} conditions on learned expected return values. In addition to plotting the performance against the target return, we also use a histogram to show the distribution of values (returns or learned expected returns) on which the agents are trained. As shown in the histograms (\autoref{fig:c4_plot}, \autoref{fig:toy_plot}, \autoref{fig:2048_plot}), \algoshort{} often learns values for many more modes of behaviors corresponding to many different expected returns. Unlike RvS, when an agent is trained on a particular expected return value, the actual average performance of the agent is often close to this value. 
This is useful for offline settings where tuning the target return in an online environment is not feasible, since \textbf{with \algoshort{} one can have confidence in which levels of performance the model will be able to achieve}. In contrast, with return-conditioning, the performance of the return-conditioned policy may not correlate with the returns on which it was trained.

Finally, we empirically measured the relationship between the independence of learned trajectory statistics and the performance of the agent. As shown in \autoref{fig:data_adv_plot}, we trained several agents with different hyperparameters on the gambling task and measured the dynamics loss (\autoref{eq:dyn_loss}) and performance of each agent. Indeed, \textbf{agents with trajectory statistics that enable the dynamics model to ``cheat'' and get a low loss performed worse than those with representations independent from environment stochasticity}, empirically confirming \autoref{thm:main}.

\section{Related Work}

{\bf Offline Reinforcement Learning.} Offline reinforcement learning is a framework where logged data is used to learn behaviors \citep{levine_offline_2020}. In contrast to most online RL methods, offline RL avoids the need for often-expensive online data collection by using fixed datasets of trajectories generated by various policies and stitching together observed behaviors in order to optimize a reward function. The main challenge in the offline setting is that offline RL agents cannot collect more data to reduce their uncertainty and therefore offline RL approaches often have some type of value pessimism or policy constraints to keep the learned policy within the support of the offline dataset. Conservative Q-Learning \citep{kumar_conservative_2020} accomplishes this by learning a conservative lower-bound on the value of the current policy and achieve state-of-the-art performance on most offline-RL benchmarks. Other methods rely on ensembles (e.g., \citep{agarwal_optimistic_2020}) or policy regularization (e.g., \citep{wu_behavior_2019, kumar_stabilizing_2019, fujimoto_minimalist_2021}). Several standardized benchmarks have emerged for testing offline RL agents, including D4RL \citep{fu_d4rl_2020} and RL Unplugged \citep{gulcehre_rl_2020}. In contrast to our work, prior methods for offline RL primarily evaluate on deterministic environments, such as robotics and locomotion tasks as well as Atari tasks, which are near-deterministic \citep{kumar_conservative_2020}. 

{\bf Reinforcement Learning via Supervised Learning.} At the core of our work is the idea of reducing reinforcement learning to a \textit{prediction problem} and solving it via supervised learning. This idea was first proposed by \citet{schmidhuber_reinforcement_2019} and \citet{kumar_reward-conditioned_2019}, who proposed to learn behaviors by predicting actions conditioned on a desired outcome such as return. \citet{ghosh_learning_2021} proposed to use the same approach to solve goal-conditioned tasks and \citet{paster_planning_2021} proposed to predict multi-step action sequences, showing a connection between sequence modeling and optimal open-loop planning. Decision Transformer \citep{chen_decision_2021} proposed to use a transformer \citep{vaswani_attention_2017} to condition on trajectory history when predicting actions and tested the approach on offline RL rather than online, achieving results competitive with value-based offline RL approaches. Several followup extensions to Decision Transformer were explored, including using non-return goals \citep{furuta_generalized_2022} and using pretrained transformers \citep{reid_can_2022}. In a recent work, \citet{emmons_rvs_2021} coined the term RvS (reinforcement learning via supervised learning) to describe such approaches. Notably found that a transformer is not necessary to perform well on most tasks, showing that a two-layer MLP with sufficient regularization can actually outperform transformers at a fraction of the computational cost. 

No prior RvS approach has been thoroughly evaluated in stochastic environments. \citet{paster_planning_2021} describes a counter-example where an RvS agent in \citet{ghosh_learning_2021} doesn't converge in a stochastic environment, and proposes a solution. However, the approach can only be used to plan action sequences rather than reactive plans and therefore won't achieve maximal performance in many stochastic settings. \citet{ortega_shaking_2021} give a high level view of how sequence modeling can be affected by delusions in various problem settings when not treating actions as interventions. Our contribution is a more precise characterization of the problem within the framework of RvS which relates the choice of goal to environmental stochasticity and directly evokes an efficient algorithm for using RvS in stochastic environments without the need for explicit causal inference or intervention by carefully choosing the goals on which the agents conditions.

{\bf Adversarial Learning.} \algoshort{} uses an adversarial loss to ensure that trajectory statistics are independent from environment stochasticity. Adversarial losses are also widely used in fairness (e.g., \citep{beutel_data_2017, ganin_domain-adversarial_2017, zhang_mitigating_2018}), where it is used to prevent models from using sensitive attributes, and in generative modeling such as in GANs \citep{goodfellow_generative_2014}.
\section{Conclusion}

As models are scaled up in pursuit of better performance \citep{brown_language_2020}, it is important to acknowledge the limitations that cannot be fixed by more scale and data alone. Our work points out an issue with the increasingly popular Decision Transformer \citep{chen_decision_2021} and other RL via supervised learning algorithms when making decisions in stochastic environments. We show why such errors occur and show that these approaches work if and only if the goals that they are trained to achieve are independent from environment stochasticity. We give a practical algorithm that transforms problematic goals such as trajectory returns into expected returns that satisfy this condition. Finally, we validate our approach by testing it on several new stochastic benchmark tasks, showing empirically that our approach vastly outperforms return-conditioned models in terms of alignment between the target and average achieved return as well as maximum performance. We hope that our work gives insight into how predictive models can be used to make optimal decisions, even in stochastic environments, hopefully paving the way for the creation of more general and useful agents.

\paragraph{Limitations.}\label{par:limitations} There are several limitations and opportunities for future work. First, our approach relies on learning a dynamics model. In environments where a decently-strong dynamics model cannot be learned, the adversarial clustering in ESPER may fail. Second, ESPER relies on adversarial learning, which similarly to GANs \citep{goodfellow_generative_2014}, can have learning dynamics that are sensitive to hyperparameter choices (for example learning rate). Additionally, just like in k-means clustering, there are many possible cluster assignments for a given offline dataset. The final performance of ESPER can be affected by variance in the adversarial clustering phase. Finally, while we empirically validate our approach with return goals, we did not explore the use other types of goals where conditioning the model on the mean may be less appropriate, such as visual goals.

\paragraph{Societal Impact.}\label{par:impact} We believe that this work will result in a positive societal impact, since our approach can help to avoid unexpected behavior when controlling an agent trained using supervised learning. However, we acknowledge that powerful automated decision making algorithms such as \algoshort{} have the potential to make harmful decisions when trained with under-specified objectives or on biased data.

\section*{Acknowledgements}

The authors thank Forest Yang, Harris Chan, Dami Choi, Joyce Yang, Sheng Jia, Nikita Dhawan, Jonah Philion, Michael Zhang, Qizhen Zhang, Lev McKinney, and Yongchao Zhou for insightful discussions and helpful feedback on drafts of the paper. We gratefully acknowledge funding from the Natural Sciences and Engineering Research Council of Canada (NSERC) [JB: 2020-06904], the Canada CIFAR AI Chairs Program, Microsoft Research, the Google Research Scholar Program and the Amazon Research Award. Resources used in preparing this research were provided, in part, by the Province of Ontario, the Government of Canada through CIFAR, and companies sponsoring the Vector Institute for Artificial Intelligence (\url{www.vectorinstitute.ai/partners}).

\bibliographystyle{unsrtnat}  % do not change this line!
% \balance  % do not change this line -- unless you manually balance your last page
\bibliography{esper}

\section*{Checklist}

%%% BEGIN INSTRUCTIONS %%%
% The checklist follows the references.  Please
% read the checklist guidelines carefully for information on how to answer these
% questions.  For each question, change the default \answerTODO{} to \answerYes{},
% \answerNo{}, or \answerNA{}.  You are strongly encouraged to include a {\bf
% justification to your answer}, either by referencing the appropriate section of
% your paper or providing a brief inline description.  For example:
% \begin{itemize}
%   \item Did you include the license to the code and datasets? \answerYes{See Section~\ref{gen_inst}.}
%   \item Did you include the license to the code and datasets? \answerNo{The code and the data are proprietary.}
%   \item Did you include the license to the code and datasets? \answerNA{}
% \end{itemize}
% Please do not modify the questions and only use the provided macros for your
% answers.  Note that the Checklist section does not count towards the page
% limit.  In your paper, please delete this instructions block and only keep the
% Checklist section heading above along with the questions/answers below.
%%% END INSTRUCTIONS %%%

\begin{enumerate}

\item For all authors...
\begin{enumerate}
  \item Do the main claims made in the abstract and introduction accurately reflect the paper's contributions and scope?
    \answerYes{}
  \item Did you describe the limitations of your work?
    \answerYes{See \autoref{par:limitations}.}
  \item Did you discuss any potential negative societal impacts of your work?
    \answerYes{See \autoref{par:impact}.}
  \item Have you read the ethics review guidelines and ensured that your paper conforms to them?
    \answerYes{}
\end{enumerate}

\item If you are including theoretical results...
\begin{enumerate}
  \item Did you state the full set of assumptions of all theoretical results?
    \answerYes{}
        \item Did you include complete proofs of all theoretical results?
    \answerYes{See the appendix in the supplemental materials.}
\end{enumerate}

\item If you ran experiments...
\begin{enumerate}
  \item Did you include the code, data, and instructions needed to reproduce the main experimental results (either in the supplemental material or as a URL)?
    \answerYes{Limited code is available in the supplemental material.}
  \item Did you specify all the training details (e.g., data splits, hyperparameters, how they were chosen)?
    \answerYes{See appendix.}
        \item Did you report error bars (e.g., with respect to the random seed after running experiments multiple times)?
    \answerYes{Experiments are run on multiple seeds with standard deviation error bars.}
        \item Did you include the total amount of compute and the type of resources used (e.g., type of GPUs, internal cluster, or cloud provider)?
    \answerYes{See appendix.}
\end{enumerate}

\item If you are using existing assets (e.g., code, data, models) or curating/releasing new assets...
\begin{enumerate}
  \item If your work uses existing assets, did you cite the creators?
    \answerNA{}
  \item Did you mention the license of the assets?
    \answerNA{}
  \item Did you include any new assets either in the supplemental material or as a URL?
    \answerNA{}
  \item Did you discuss whether and how consent was obtained from people whose data you're using/curating?
    \answerNA{}
  \item Did you discuss whether the data you are using/curating contains personally identifiable information or offensive content?
    \answerNA{}
\end{enumerate}

\item If you used crowdsourcing or conducted research with human subjects...
\begin{enumerate}
  \item Did you include the full text of instructions given to participants and screenshots, if applicable?
    \answerNA{}
  \item Did you describe any potential participant risks, with links to Institutional Review Board (IRB) approvals, if applicable?
    \answerNA{}
  \item Did you include the estimated hourly wage paid to participants and the total amount spent on participant compensation?
    \answerNA{}
\end{enumerate}

\end{enumerate}

%%%%%%%%%%%%%%%%%%%%%%%%%%%%%%%%%%%%%%%%%%%%%%%%%%%%%%%%%%%

%%%%%%%%%%%%%%%%%%%%%%%%%%%%%%%%%%%%%%%%%%%%%%%%%%%%%%%%%%%%

\newpage

\appendix
\section{Appendix}

\subsection{Proof for \autoref{thm:main}}
\label{sec:proof}

% \begin{proof}
% Let $\pi(a|s, z) = p_{\pi_{\mathcal{D}}}(a|s, I(\tau)=z)$. If $I(\tau) \perp s_{t+1}$ given $s_t, a_t$, then,

% \begin{align*}
%     \E_{\tau \sim p_{\pi_z}(\tau|s_0)}\left[D(I(\tau), z)\right] &\\
%     &\hspace{-8em}= \sum_{a_0, s_1, \ldots, s_T, a_T} \left(\prod_t p_{\pi_{\mathcal{D}}}(a_t|s_t, I(\tau)=z)\right) \left(\prod_t p(s_{t+1}|s_t, a_t)\right) D(I(\tau), z)\\
%     % &= \hspace{-2em}\sum_{a_0, s_1, \ldots, s_T, a_T} \left(\prod_t p_{\pi_{\mathcal{D}}}(a_t|s_t, I(\tau)=z)\right) \left(\prod_t p_{\pi_{\mathcal{D}}}(s_{t+1}|s_t, a_t)\right) D(I(\tau), z)\\
%     &\hspace{-8em}= \sum_{a_0, s_1, \ldots, s_T, a_T} \left(\prod_t p_{\pi_{\mathcal{D}}}(a_t|s_t, I(\tau)=z)\right) \left(\prod_t p_{\pi_{\mathcal{D}}}(s_{t+1}|s_t, a_t, I(\tau)=z)\right) D(I(\tau), z)\\
%     &\hspace{-8em}= \E_{\tau \sim p_{\pi_{\mathcal{D}}}(\tau|s_0, I(\tau) = z)}\left[D(I(\tau), z)\right]\\
%     &\hspace{-8em}= 0
% \end{align*}

% The second equality follows from the independence assumption and the final equality holds because the expectation is conditioned on $I(\tau) = z$.

% To show the other direction, assume $\E_{\tau \sim p_{\pi_z}(\tau|s_0)}\left[D(I(\tau), z)\right] = 0$. 

% Then, $p_{\pi_z}(\tau|s_0) > 0$ implies  $D(I(\tau), z) = 0$ since distance $D(\cdot, \cdot)$ is non-negative, which implies that when a trajectory has a non-zero probability, then $I(\tau) = z$. 

% Therefore, $p_{\pi_z}(\tau|s_0) = p_{\pi_z}(\tau|s_0, I(\tau)=z)$, which finally implies that $p_{\pi_z}(s_{t+1}|s_t, a_t) = p_{\pi_z}(s_{t+1}|s_t, a_t, I(\tau)=z)$. 

% \end{proof}

\begin{proof}
Let $\pi(a_t|s_0, a_0, \ldots, s_t, z) = p_{\pi_{\mathcal{D}}}(a_t|s_0, a_0, \ldots, s_t, I(\tau) = z)$ and

let $p_{\pi_{\mathcal{D}}}(s_t|s_0, a_0, \ldots, a_{t-1}) = p_{\pi_{\mathcal{D}}}(s_t|s_0, a_0, \ldots, a_{t-1}, I(\tau) = z)$. Then,

\begin{align*}
    \E_{\tau \sim p_{\pi_z}(\tau|s_0)}\left[D(I(\tau), z)\right] &\\
    &\hspace{-10em}= \sum_{a_0, s_1, \ldots, s_T, a_T} p_{\pi_z}(\tau|s_0) D(I(\tau), z)\\
    &\hspace{-10em}= \sum_{a_0, s_1, \ldots, s_T, a_T} \prod_t \pi(a_t|s_0, a_0, \ldots, s_t, z)p(s_{t+1}|s_0, a_0, \ldots, a_t) D(I(\tau), z)\\
    &\hspace{-10em}= \sum_{a_0, s_1, \ldots, s_T, a_T} \prod_t p_{\pi_{\mathcal{D}}}(a_t|s_0, a_0, \ldots, s_t, I(\tau) = z)p_{\pi_{\mathcal{D}}}(s_t|s_0, a_0, \ldots, a_{t-1}, I(\tau) = z)D(I(\tau), z)\\
    &\hspace{-10em}= \sum_{a_0, s_1, \ldots, s_T, a_T} p_{\pi_{\mathcal{D}}}(\tau|s_0, I(\tau)=z) D(I(\tau), z)\\
    &\hspace{-10em}= \E_{\tau \sim p_{\pi_{\mathcal{D}}}(\tau|s_0, I(\tau) = z)}\left[D(I(\tau), z)\right]\\
    &\hspace{-10em}= 0
\end{align*}
\end{proof}

The first equality follows from the definition of expectation. The second equality follows by the definition of a trajectory (the joint probability over the sequence of states and actions) and by the chain rule. The third equality follows due to our assumptions. Finally, the last two equalities use the definition of a trajectory and expectation.

To show the other direction, assume $\E_{\tau \sim p_{\pi_z}(\tau|s_0)}\left[D(I(\tau), z)\right] = 0$.

Then, $p_{\pi_z}(\tau|s_0) > 0$ implies  $D(I(\tau), z) = 0$ since distance $D(\cdot, \cdot)$ is non-negative, which implies that when a trajectory has a non-zero probability, then $I(\tau) = z$. 

Therefore, $p_{\pi_z}(\tau|s_0) = p_{\pi_z}(\tau|s_0, I(\tau)=z)$, which finally implies that $p_{\pi_{\mathcal{D}}}(s_t|s_0, a_0, \ldots, a_{t-1}) = p_{\pi_{\mathcal{D}}}(s_t|s_0, a_0, \ldots, a_{t-1}, I(\tau) = z)$.

\subsubsection{Regarding History Conditioning}

We thank the authors of \citet{yang_dichotomy_2022}, who pointed out an error in the theory in a previous draft of this paper via a counter-example in Appendix C. Our previous proof incorrectly assumed that independence from past states and actions, via our choice of Markov policies and via the Markov assumptions in MDPs, held when conditioning on $I(\tau)$. This is not in general correct, and we have updated our theory to reflect that theoretical guarantees under our framework require a memory-based policy and for the conditioning trajectory statistics to be independent of the future state given the entire history of the trajectory, not just the most recent state and action.

We note that in practice, Markov policies work well with ESPER, as does optimizing trajectory statistics for independence given only the most recent state.

\subsection{\algoshort{} Generalizes Return-Conditioning}

While ESPER is designed to fix the problem with RvS in stochastic environments, it still performs well in deterministic environments. In a deterministic environment when the dynamics model can already predict the next state perfectly with just the current state and action, only the policy reconstruction loss will be active. Therefore, the optimal clustering will have each trajectory in its own cluster and labeled with their original return, and ESPER reduces to return-conditioned RvS. In \autoref{fig:mujoco} we show empirically that the performance of ESPER matches return-conditioned Decision Transformer in the deterministic D4RL Mujoco tasks \citep{fu_d4rl_2020}.

% \begin{table}[!ht]
%     \centering
%     \begin{tabular}{lllrrr}
%     \toprule
%         \textbf{Dataset} & \textbf{Environment} & \textbf{Target} & \textbf{ESPER (ours)} & \textbf{DT} & \textbf{CQL} \\ \midrule
%         Medium-Expert & hopper & 3600 & 89.95$\pm$13.91 & 79.64$\pm$34.45 & \textbf{110.0} \\
%         Medium-Expert & walker & 5000 & \textbf{106.87$\pm$1.26} & \textbf{107.96$\pm$0.63} & 98.7 \\
%         Medium-Expert & half-cheetah & 6000 & \textbf{66.95$\pm$11.13} & 42.89$\pm$0.35 & \textbf{62.4} \\ \midrule
%         Medium & hopper & 3600 & 50.57$\pm$3.43 & \textbf{59.46$\pm$4.74} & \textbf{58.0} \\ 
%         Medium & walker & 5000 & 69.78$\pm$1.91 & 69.7$\pm$7.12 & \textbf{79.2}\\ 
%         Medium & half-cheetah & 6000 & 42.31$\pm$0.08 & 42.32$\pm$0.39 & \textbf{44.4} \\\midrule 
%         Medium-Replay & hopper & 3600 & \textbf{50.20$\pm$16.09} & \textbf{61.94$\pm$16.99} & 48.6 \\ 
%         Medium-Replay & walker & 5000 & \textbf{65.48$\pm$8.05} & \textbf{63.77$\pm$2.82} & 26.7 \\ 
%         Medium-Replay & half-cheetah & 6000 & 35.85$\pm$1.97 & 36.88$\pm$0.36 & \textbf{46.2} \\ \bottomrule
%     \end{tabular}
%     \caption{Results on D4RL Mujoco Tasks\vspace{-1em}}
%         \label{fig:mujoco}
% \end{table}

\begin{figure}[!ht]
    \centering
    \begin{center}
\begin{tabular}{lllrrr}
    \toprule
        \textbf{Dataset} & \textbf{Environment} & \textbf{Target} & \textbf{ESPER (ours)} & \textbf{DT} & \textbf{CQL} \\ \midrule
        Medium-Expert & hopper & 3600 & 89.95$\pm$13.91 & 79.64$\pm$34.45 & \textbf{110.0} \\
        Medium-Expert & walker & 5000 & \textbf{106.87$\pm$1.26} & \textbf{107.96$\pm$0.63} & 98.7 \\
        Medium-Expert & half-cheetah & 6000 & \textbf{66.95$\pm$11.13} & 42.89$\pm$0.35 & \textbf{62.4} \\ \midrule
        Medium & hopper & 3600 & 50.57$\pm$3.43 & \textbf{59.46$\pm$4.74} & \textbf{58.0} \\ 
        Medium & walker & 5000 & 69.78$\pm$1.91 & 69.7$\pm$7.12 & \textbf{79.2}\\ 
        Medium & half-cheetah & 6000 & 42.31$\pm$0.08 & 42.32$\pm$0.39 & \textbf{44.4} \\\midrule 
        Medium-Replay & hopper & 3600 & \textbf{50.20$\pm$16.09} & \textbf{61.94$\pm$16.99} & 48.6 \\ 
        Medium-Replay & walker & 5000 & \textbf{65.48$\pm$8.05} & \textbf{63.77$\pm$2.82} & 26.7 \\ 
        Medium-Replay & half-cheetah & 6000 & 35.85$\pm$1.97 & 36.88$\pm$0.36 & \textbf{46.2} \\ \bottomrule
    \end{tabular}
\end{center}
    \caption{Results on D4RL Mujoco Tasks\vspace{-1em}}
        \label{fig:mujoco}
\end{figure}

\subsection{\algoshort{} with MLPs}
\label{sec:mlp}

As reported in \citet{emmons_rvs_2021}, a transformer is not necessary to get strong performance with RvS on many environments. We also find that we get similar results when training using a simple MLP with three hidden layers, as shown in \autoref{fig:mlp_table}.

\begin{figure}[t]
    \centering
    \begin{center}
\begin{tabular}{llll} 
 \toprule
 Task & Return-Conditioned RvS (MLP) & CQL & \algoshort{} (MLP) (Ours) \\
 \midrule
 Gambling & -0.05 (0.27) & \textbf{1.0 (0.0)} & \textbf{1.0 (0.0)}\\ 
 Connect Four & 0.24 (0.15) & 0.61 (0.05) & \textbf{0.99 (0.01)} \\ 
 2048 & 0.56 (0.03) & 0.7 (0.09) & \textbf{0.81 (0.05)} \\
%  Half Cheetah (G) & \textbf{491.28 (32.4)} & \textbf{475.57 (21.85)} & 0 \\
 \bottomrule
\end{tabular}
\end{center}
    \caption{\algoshort{} performs similarly on benchmark tasks when using a simple MLP rather than a transformer for the policy.\vspace{-1em}}
    \label{fig:mlp_table}
\end{figure}

\subsection{Benchmark Task Details}
\label{sec:benchmark_details}

\subsubsection{An Illustrative Gambling Environment}

To clearly illustrate the issue with prior approaches when conditioning on outcome variables that are correlated with environment stochasticity, we run our approach on a simple gambling environment. This environment, illustrated in \autoref{fig:toy}, has three actions: one which will result in the agent gaining one reward, and two gambling actions where the agent could receive either positive or negative reward.

\subsubsection{Multi-Agent Game: Connect Four}

Connect Four is a popular two-player board game where players alternate in placing tiles in the hope to be the first to get four in a row. In this task, we consider a single agent version of Connect Four where the opponent is fixed to be a stochastic agent. This is a realistic setting, since in the real world, the single greatest source of stochasticity will likely be other agents that the agent is interacting with. An ideal agent will need to take this into account to make optimal decisions.

Since Connect Four can be optimally solved with search techniques, we set the opposing agent to be optimal\footnote{We use the solver from \href{https://github.com/PascalPons/connect4}{https://github.com/PascalPons/connect4}.}, with a small chance that it won't place a piece in the rightmost column when it is optimal to do so. This creates an MDP with two ways to win: first, the agent can play optimally to the end of the game to guarantee a win (the first player always can win); second, to win quickly, the agent can place four pieces on the rightmost column with the hope that the opponent will not block (which happens with a low probability). 

The offline dataset for this task is generated by using an epsilon-optimal agent with 50\% probability and an exploiter agent that only places pieces on the right with 50\% probability.

\subsubsection{Stochastic Planning in 2048}

2048 \citep{cirulli_2048_2014} is a single player puzzle game where identical tiles are combined in order to build up tiles representing different powers of two. With each move, a new tile randomly appears on the board, and the game ends when no moves are available. A strong 2048 agent will consider the different possible places new tiles will appear in order to maximize the potential for combining tiles in the future.

Since the vanilla version of 2048 can require billions of steps to solve with reinforcement learning \citep{antonoglou_planning_2021}, we modified the game\footnote{We used the implementation of 2048 found at \href{https://github.com/activatedgeek/gym-2048}{https://github.com/activatedgeek/gym-2048}.} by terminating the episode when a 128 tile is created. The agent gets one reward for successfully combining tiles in order to reach 128 and zero reward otherwise.

The offline dataset for this task is generated using a mixture of trajectories from an agent trained with PPO \citep{schulman_proximal_2017} using the implementation in Stable Baselines 3 \citep{raffin_stable-baselines3_2021} and a random policy.

% \subsection{Stochasticity-Independent Representations Perform Better}

% In order to verify that the degree to which the learned trajectory representations (cluster assignments) are independent of environment stochasticity affects the empirical performance of an agent, we set out to empirically measure this. We did so by training several agents in the gambling environment with different hyperparameters (specifically $\beta_{\text{adv}}$) and measured their performance. Results are shown in \autoref{fig:adv_plot}. Clustering assignments that enabled the dynamics model to predict the next state more accurately than should be possible lead to agents that performed worse.

% \label{sec:adv}

% \begin{figure}[t]
%     \centering
%     \includegraphics[width=0.5\textwidth]{figures/dyn.pdf}
%     \caption{We trained several agents with different hyperparameters on the gambling task. Notably, agents with trajectory statistics that enabled the dynamics model to ``cheat'' and get a low loss performed worse than those with representations independent from environment stochasticity, empirically confirming \autoref{thm:main}.}
%     \label{fig:adv_plot}
% \end{figure}

\subsection{Training Details}
\label{sec:training_details}

\subsubsection{Hyperparameters}

The most important hyperparameters to tune for our method are the tradeoff between reconstructing actions and removing dynamics information from the clusters, controlled by $\beta_{\text{act}}$, and the number of clusters, controlled by \verb|rep_size| and \verb|rep_groups|. The trajectory representation (i.e. cluster assignment) is formed by sampling from \verb|rep_groups| categorical variables of dimension \verb|rep_size| / \verb|rep_groups|. These values were tuned per environment using a simple grid search. Specific hyperparameter values for each environment can be found at \autoref{tab:hyp}.

\begin{table}[!h]
    \centering
    \begin{tabular}{|c|c|}\hline
    \multicolumn{2}{|c|}{Decision Transformer}\\\hline
    \verb|batch_size| & 64 \\\hline
    \verb|learning_rate| & 5e-4 \\ \hline
    \verb|policy_architecture| & Transformer \\ \hline
    \verb|embed_dim| & 128\\ \hline
    \verb|n_layer| & 3\\ \hline
    \verb|n_head| & 1\\ \hline
    \verb|activation_fn| & ReLU\\ \hline
    \verb|dropout| & 0.1\\ \hline
    \verb|n_head| & 1\\ \hline
    \verb|weight_decay| & 1e-4\\ \hline
    \verb|warmup_steps| & 10000\\ \hline
    \verb|discount| $\gamma$ & 1 \\ \hline
    \verb|eval_samples| & 100 \\ \hline
    \hline \multicolumn{2}{|c|}{ESPER} \\ \hline
    \verb|learning_rate| & 1e-4 \\ \hline
    \verb|batch_size|   & 100 \\\hline
    \verb|hidden_size| & 512 \\ \hline
    \verb|policy.hidden_layers| & 3 \\ \hline
    \verb|clustering_model.hidden_layers| & 2 \\\hline
    \verb|clustering_model.lstm_hidden_size| & 512 \\\hline
    \verb|clustering_model.lstm_layers| & 1 \\\hline
    \verb|action_predictor.hidden_layers| & 2 \\\hline
    \verb|return_predictor.hidden_layers| & 2 \\\hline
    \verb|transition_predictor.hidden_layers| & 2 \\\hline
    \verb|activation_fn| & ReLU \\\hline
    \verb|optimizer| & AdamW \citep{loshchilov_decoupled_2019} \\\hline
    \verb|normalization| & batch norm \citep{ioffe_batch_2015} \\ \hline
    \end{tabular}
    \vspace{1em}
    \caption{\algoshort{} hyperparameters}
    \label{tab:hyp}
\end{table}

\begin{table}[!h]
    \centering
    \begin{tabular}{|c|c|}
    \hline \multicolumn{2}{|c|}{Gambling} \\ \hline
    \verb|rep_size| & 8 \\ \hline
    \verb|rep_groups| & 1 \\ \hline
    $\beta_{\text{act}}$ & 0.01 \\ \hline
    $\beta_{\text{adv}}$ & 1 \\ \hline
    \verb|cluster_epochs| & 5 \\ \hline
    \verb|label_epochs| & 1 \\ \hline
    \verb|policy_steps| & 50000 \\ \hline
    \hline \multicolumn{2}{|c|}{Connect Four} \\ \hline
    \verb|rep_size| & 128 \\ \hline
    \verb|rep_groups| & 4 \\ \hline
    $\beta_{\text{act}}$ & 0.05 \\ \hline
    $\beta_{\text{adv}}$ & 1 \\ \hline
    \verb|cluster_epochs| & 5 \\ \hline
    \verb|label_epochs| & 5 \\ \hline
    \verb|policy_steps| & 50000 \\ \hline
    \hline \multicolumn{2}{|c|}{2048} \\ \hline
    \verb|rep_size| & 128 \\ \hline
    \verb|rep_groups| & 4 \\ \hline
    $\beta_{\text{act}}$ & 0.02 \\ \hline
    $\beta_{\text{adv}}$ & 1 \\ \hline
    \verb|cluster_epochs| & 4 \\ \hline
    \verb|label_epochs| & 1 \\ \hline
    \verb|policy_steps| & 50000 \\ \hline
    \hline \multicolumn{2}{|c|}{Mujoco} \\ \hline
    \verb|rep_size| & 256 \\ \hline
    \verb|rep_groups| & 4 \\ \hline
    $\beta_{\text{act}}$ & 0.03 \\ \hline
    $\beta_{\text{adv}}$ & 1 \\ \hline
    \verb|cluster_epochs| & 5 \\ \hline
    \verb|label_epochs| & 5 \\ \hline
    \verb|policy_steps| & 100000 \\ \hline
    \end{tabular}
    \vspace{1em}
    \caption{Environment hyperparameters}
    \label{tab:env_hyp}
\end{table}

\subsubsection{Computation}

Our experiments were run on T4 GPUs and running our algorithm took only around $1$ hour per seed. We used PyTorch \citep{paszke_pytorch_2019} and experiments were tracked using Weights and Biases \citep{biewald_experiment_2020}.

\subsection{Baselines}

We used the codebase provided by the authors of Decision Transformer \citep{chen_decision_2021} to implement our experiments. Other than hyperparameters specific to ESPER, Decision Transformer and ESPER use the same hyperparameters found in \autoref{tab:hyp}.For Conservative Q-Learning (CQL) \citep{kumar_conservative_2020}, we used the default implementation from d3rlpy, an offline deep RL library \citep{seno_d3rlpy_2021}.

\begin{algorithm2e}
\caption{\algoshort{}}\label{algo:esper}
\KwData{Dataset $\mathcal{D}$ consisting of trajectories of states, actions, and rewards}
\For{\text{cluster iteration} $k = 1, 2, \ldots$}{
    $s, a \gets \text{sample batch of trajectories from } \mathcal{D}$\;
    $\operatorname{assignments} \gets \operatorname{ClusterAssignments}(s, a)$\;
    $\hat{a} \gets \operatorname{ActionPredictor}(s, \operatorname{assignments})$\;
    $\hat{s}_{t+1} \gets \operatorname{TransitionPredictor}(s, a, \operatorname{assignments})$\;
    Update cluster assignments by training $\hat{a}$ to predict $a$ and $\hat{s}_{t+1}$ to \textit{not} predict $s_{t+1}$ by minimizing \autoref{eq:rep_loss}\;
    Train the $\operatorname{TransitionPredictor}$ to predict next states by minimizing \autoref{eq:dyn_loss}\;
}
Fit a model $f_{\psi}(I(\tau))$ to predict the average trajectory return $\hat{R}$ in each cluster\;
Create dataset $\mathcal{D'}$ of states, actions, and average returns\;
\For{\text{policy iteration} $k = 1, 2, \ldots$}{
    $s, a, \hat{R} \gets \text{sample batch of states, actions, and average returns from } \mathcal{D}'$\;
    Train the policy $\pi(a|s, \hat{R})$ by minimizing \autoref{eq:policy_loss}\;
}
\end{algorithm2e}

\subsection{\algoshort{} Pseudocode}
\label{sec:code}

We provide pseudocode for \algoshort{} in \autoref{algo:esper} and detailed pseudocode (roughly following the syntax used in PyTorch \citep{paszke_pytorch_2019}) for the clustering step of \algoshort{} in \autoref{algo:cluster}.

\begin{algorithm2e}[hb]
% \SetAlgoLined
\definecolor{codeblue}{rgb}{0.28125,0.46875,0.8125}
\lstset{
    basicstyle=\fontsize{9pt}{9pt}\ttfamily\bfseries,
    commentstyle=\fontsize{9pt}{9pt}\color{codeblue},
    keywordstyle=
}
% Thanks decision transformer authors for the code style
\begin{lstlisting}[language=python,frame=none]
# s, a, seq_len: states, actions, sequence lengths (in case of early termination)
# Models:
# encoder_mlp - MLP
# temporal_encoder - LSTM
# rep_mlp - MLP
# act_mlp - MLP
# dynamics_mlp - MLP

def ClusterAssignments(s, a):
    bsz, t = s.shape[:2]
    x = torch.cat((s, a), dim=-1).view(bsz, t, -1)
    x = torch.flip(x, dims=[1])
    x = torch.encoder_mlp(x)
    x, hidden = temporal_encoder(x, init_hidden())
    x = torch.flip(x, dims=[1])
    x = rep_mlp(x)
    cluster_assignments = F.gumbel_softmax(x)
    return cluster_assignments
    
def ActionPredictor(s, cluster_assignments):
    # cluster_assignments.shape = [bsz, t, -1]
    # For a timestep t, we sample a cluster assignment for a timestep
    # from 0, ..., t randomly
    past_assignments = sample_past_assignments(cluster_assignments)
    x = torch.cat((s, past_assignments), dim=-1)
    pred_next_action = act_mlp(x)
    return pred_next_action
    
def TransitionPredictor(s, a, cluster_assignments):
    # cluster_assignments.shape = [bsz, t, -1]
    # For a timestep t, we sample a cluster assignment for a timestep
    # from 0, ..., t randomly
    past_assignments = sample_past_assignments(cluster_assignments)
    x = torch.cat((s, a, past_assignments), dim=-1)
    pred_next_s = dynamics_mlp(x)
    return pred_next_s
    
# training loop    
for (s, a) in dataloader:
    # get cluster assignments for the trajectories
    cluster_assignments = ClusterAssignments(s, a) 
    # predict actions based on clusters
    pred_next_action = ActionPredictor(s, cluster_assignments) 
    # predict state transitions based on clusters
    pred_next_s = TransitionPredictor(s, a, cluster_assignments) 
    # optimize the clusters for action prediction and to hurt next state prediction
    cluster_loss = act_loss(pred_next_action, a) - state_loss(pred_next_s, s) 
    cluster_loss.zero_grad(); cluster_loss.backward(); cluster_optimizer.step();
    # optimize the transition predictor
    dyn_loss = state_loss(pred_next_s, s)
    dyn_loss.zero_grad(); dyn_loss.backward(); dyn_optimizer.step();

\end{lstlisting}
\caption{\algoshort{} - Adversarial Clustering}
\label{algo:cluster}
\end{algorithm2e}

\end{document}